\newenvironment{proof}{\paragraph{Proof:}}{\hfill$\square$}
\newtheorem{theorem}{Theorem}
\newtheorem{lemma}[theorem]{Lemma}
\date{} 					
\title{Optimal Cooperative Multiplayer Learning Bandits with Noisy Rewards and No Communication}
\author {
    William Chang\textsuperscript{\rm 1},
    Yuanhao Lu\textsuperscript{\rm 2},\\
    \textsuperscript{\rm 1}University of California, Los Angeles\footnote{chang314@g.ucla.edu}\\
    \textsuperscript{\rm 2}Princeton University\footnote{terrylu@princeton.edu}\\
}
\begin{document}
\maketitle

\begin{abstract}

We consider a cooperative multiplayer bandit learning problem where the players are only allowed to agree on a strategy beforehand, but cannot communicate during the learning process. In this problem, each player simultaneously selects an action. Based on the actions selected by all players, the team of players receives a reward. The actions of all the players are commonly observed. However, each player receives a noisy version of the reward which cannot be shared with other players. Since players receive potentially different rewards, there is an asymmetry in the information used to select their actions. In this paper, we provide an algorithm based on upper and lower confidence bounds that the players can use to select their optimal actions despite the asymmetry in the reward information. We show that this algorithm can achieve logarithmic $O(\frac{\log T}{\Delta_{\bm{a}}})$ (gap-dependent) regret as well as $O(\sqrt{T\log T})$ (gap-independent) regret. This is asymptotically optimal in $T$. We also show that it performs empirically better than the current state of the art algorithm for this environment.
\end{abstract}

\section{Introduction}
The field of stochastic Multi-armed Bandit (MAB) contains some of the most well-studied problems in reinforcement learning. These online learning algorithms are designed to understand how to implement exploration-exploitation tradeoffs to achieve the most reward. The classical version of MAB consists of a single agent with a set of $[m]:= \{1,\ldots,m\}$ actions to choose from. Each action is associated with an unknown reward distribution that is sub-Gaussian. A \emph{round} in a MAB environment is defined as one iteration where the player selects an action and obtains a reward. At every such round, the agent's goal is to select the arm with the highest expected reward. \footnote{We term `arm' and `action' interchangeably} One can measure the success of a policy for arm selection using the notion of \emph{regret}, which measures how often a suboptimal arm is chosen (pulled). The goal is to minimize the regret for large horizons. In the classical single-agent setting, \cite{lai1985asymptotically} showed that every policy will not be able to perform better than $O(\log T)$ gap-dependent regret. This lower bound on regret was first attained by the UCB algorithm. 


Single-player MABs, however, inadequately model the complexities of real-world applications involving multiple interacting entities. This gap has sparked a recent growing interest in cooperative multiplayer MAB problems, where multiple agents are maximizing their collective expected rewards. For example, \cite{gai2012combinatorial, liu2010distributed, anandkumar2011distributed, kalathil2014decentralized, nayyar2016regret} introduces and applies multiplayer MAB models for spectrum sharing in wireless networks. These papers assume each player's reward is independent of the other's actions (i.e. joint actions are not considered). 

While the aforementioned works successfully extend the classical MAB problem to include multiple players, they remain restrictive. Specifically, they fail to adequately model scenarios in which the decisions of one agent might influence the rewards of others, a common interaction arising in real-world settings. For instance, in shared network settings, the bandwidth an agent consumes directly affects the network's capacity for other users. Similarly, in financial markets, buying or selling decisions made by one trader can influence the stock price and, consequently, the rewards for other traders. To provide a framework to model these problems, we consider multiagent settings where each agent has their own (marginal) set of actions to choose from. Subsequently, all agents' collective actions form a joint action across all players. Note that this is different than leader-follower games \cite{yu2022learning}, where the leader selects an action first, which the followers observe before they select their actions. 

Furthermore, our paper considers the multiagent setting where \emph{information asymmetry} is preset. By information asymmetry, we mean there exists some information that is not shared among all players. Specifically, we analyze the abovementioned multiplayer setting where \textit{reward asymmetry} is present; that is, agents do not observe the rewards obtained by other agents. Information asymmetry naturally occurs when communication between agents is restricted. Thus, effectively, we do not allow for any form of communication between agents, although they are allowed to agree on a policy before the learning phase begins and know the number of actions the other players have. This setting is rich in real-world motivation, where agents collaboratively pursue a common objective despite the absence of direct inter-agent communications. For example, in decentralized traffic control systems where individual agents — traffic lights in disparate regions — endeavor to optimize the average vehicular travel time. In this application, the actions of individual agents depend on each other, and local traffic conditions are not observed by traffic lights in remote areas, necessitating the use of joint actions and reward asymmetry as investigated in this study. 

\textbf{Our Contributions}
We propose merry go around variant of the UCB algorithm we call \texttt{mUCB-Intervals}. The novelty of this algorithm is combining an interval method for best arm selection \cite{audibert2010best}, applying it to regret minimization tasks, and including an aspect of coordination for the multiplayer setting. More explicitly, the players will decide on an ordering of the arms prior to the learning (they can use the ordering given in \cite{chang2021online}), and during the process, they will pull each arm in order. They will also maintain a \emph{desired set} in which all the arms will be in this set at the beginning. This desired set will remain the same for all players for each round. Based on the UCB "error" intervals each player will determine if the next arm should be in the set, and if not, they will communicate this to the other players by not pulling what should have been the next arm. Note that there is no explicit communication in the environment but the players know when they eliminate the next arm from the desired set, and thus can maintain the same desired set. This is similar to other bandit works in collision sensing \cite{boursier2019sic} but their communication scheme is much simpler in that they simply have to observe when other bandits have pulled the same arm. 

We show that our algorithm achieves $O(\frac{\log T}{\Delta_{\bm{a}}})$ gap dependent regret or $O(\sqrt{T\log T})$ gap independent bound. Whereas \cite{chang2021online} was able to achieve \emph{almost} optimal regret for this setting, \texttt{mUCB-Intervals} is the first algorithm to achieve optimal regret for a reward asymmetric setting. It's easy to implement and understand and performs better empirically against \texttt{mDSEE} from \cite{chang2021online}, the current state of the art algorithm for this environment.

\paragraph*{Related Work.}
The literature on multi-armed bandits is overviewed in \cite{sutton2018reinforcement,gittins2011multi,lattimore2020bandit}. Some classical papers worth mentioning are \cite{lai1985asymptotically,anantharam1987bandits,auer2002finite}. Interest in multi-player MAB models was triggered by the problem of opportunistic spectrum sharing and some early papers were \cite{gai2012combinatorial,liu2010distributed,anandkumar2011distributed}. Other papers motivated by similar problems in communications and networks are \cite{maghsudi2014channel,korda2016distributed,shahrampour2017multi,chakraborty2017coordinated}. These papers were either for the centralized case, or considered the symmetric user case, i.e., all users have the same reward distributions. Moreover, if two or more users choose the same arm, there is a ``collision", and neither of them get a positive reward.  The first paper to solve this matching problem in a general setting was \cite{kalathil2014decentralized} which obtained log-squared regret. It was then improved to log regret in \cite{nayyar2016regret} by employing a posterior sampling approach. These algorithms required implicit (and costly) communication between the players. Thus, there were attempts to design algorithms without it \cite{avner2014concurrent, rosenski2016multi, bistritz2018distributed, feraud2019decentralized, boursier2019sic}. Other recent papers on decentralized learning for multiplayer matching MAB models are \cite{wang2020optimal, shi2020decentralized}.

In the realm of multiplayer stochastic bandits, many works allow for limited communication such as those in \cite{martinez2018decentralized, martinez2019decentralized, szorenyi2013gossip, karpov2020collaborative, tao2019collaborative}. An exception is \cite{bistritz2021one} where all the players select from the same set of arms and their goal is to avoid a collision, that is, they do not want to select the same arm as another player. Another work that doesn't allow for communication \cite{xu2015distributed} where they developed  online learning algorithms that enable agents to cooperatively learn how to maximize reward with noisy global feedback without exchanging information. Recently, in light of the work from \cite{chang2021online}, there have been other works which have studied information asymmetric multiplayer bandits \cite{mao2022improving, kao2022decentralized, mao2021decentralized, kao2022efficient}.

\section{Preliminary}

\begin{figure*}
\includegraphics[width = \textwidth]{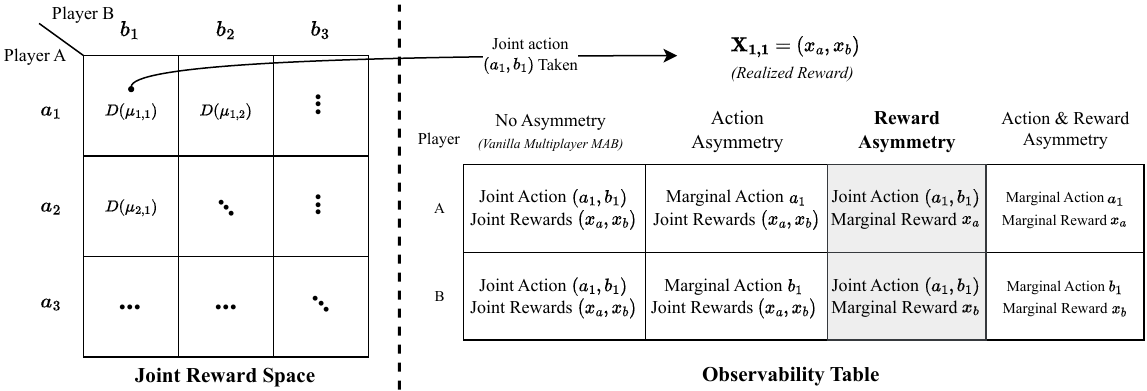}
\caption{A two-player information asymmetric in reward setting. The grid on the left is a visualization of the joint action space, where the rows correspond to the actions available to Player A while the columns correspond to the actions available to player B. Each entry in this grid has a subgaussian reward distribution $D(\mu)$ for some mean $\mu$. The table on the right lists what is observed by each player for the different types of information asymmetry. In our setting in gray, for each player, the joint actions are observed but only their copy of the IID reward is observed.}
\label{fig:problem_setting}
\end{figure*}

We follow the formulation of \textit{information asymmetry} only in rewards given in \cite{chang2021online} which we reiterate here for completeness: 

Consider a set of $M$ players $P_{\bm{1}},\cdots,P_M$, in which player $P_i$ has a set ${\mathcal K}_i$ of $K_i$ arms to pick from. At each time instant, each player picks an arm independently and simultaneously from other players from their set $\mathcal{K}_i$. The joint action can be interpreted as an $M$-tuple of arms picked denoted by $\bm{a}=(a_{\bm{1}},\cdots,a_M)$. We shall use bold font to denote vectors. For simplicity, we shall assume each player has $K$ actions to pick from which gives a total of $K^M$ joint actions.  This generates $M$ independent and identically distributed (iid) random reward $X_{\bm{a}}^i \in [0,1]$ with $i \in [M]$ from a $1$-subgaussion distribution $F_{\bm{a}}$  with mean $\mu_{\bm{a}}$. Each player $P_i$ can only observe their rewards $X_{\bm{a}}^i$ and are oblivious to all the other rewards (this is the reward asymmetry). However, they can observe the actions of all other players at all times after the joint action has been taken. Note that this is different from leader-follower games in that all players select their actions at the same time. All the players know the joint action space beforehand and they can decide on a strategy a priori. However, during learning they are not allowed to communicate in any way.

 Denote $\Delta_{\bm{a}} = \mu^* - \mu_{\bm{a}}$ where $\mu^*$ is the highest reward mean among all arm tuples, and we shall call the corresponding arm the \emph{optimal arm}. The players share a goal and that is to pull the optimal arm as often as possible. Let $a_t[i] \in \{1,\cdots,K_i\}$ be the arm chosen by player $i$ at time $t$, and denote $\bm{a_t} = (a_t[1],\cdots,a_t[m])$. A high-level objective is for the players to collectively identify the best set of arms $a^*$ corresponding to mean reward $\mu^*$. But the players do not know the means $\mu_{\bm{a}}$, nor the distributions $F_{\bm{a}}$. They must learn by playing and exploring. Thus, we can capture learning efficiency of an algorithm via the notion of \textit{expected regret},
\begin{equation}\label{eq:regret}
	R_T = \mathbb{E}\left[T\mu^* - \sum_{t=1}^T X_{\bm{a(t)}}(t)\right]
\end{equation}
where $T$ is the number of learning instances  and $X_{\bm{a(t)}}(t)$ is the random reward if arm-tuples $a(t)$ are pulled. We let $n_{\bm{a}}(t)$ be the number of times joint arm $\bm{a}$ has been pulled up  to(not including) time $t$. Furthermore let $\widehat{\mu}_{\bm{a}}^i(n_{\bm{a}}(t))$ be the empirical mean of arm $\bm{a}$ for player $i$ at time $t$ after $n_{\bm{a}}(t)$ pulls of the joint arm $\bm{a}$. Note the empirical mean is indexed by the player since each player gets their own copy of the IID reward. However, as the rewards for each joint arm have the same mean across all players, the expected regret $R_T$ is the same for each player. On the other hand, $n_{\bm{a}}$ is not indexed by the player $i$ since each player is contributing to the same joint action $\bm{a}$ at every round. 

Note that although each player will likely get different rewards, the rewards are iid so that in expectation the regret for all the players are the same. Note that fundamental results for single-player MAB problems \cite{lai1985asymptotically} suggest a $O(\log T)$-regret lower bound for the multi-player MAB problem as well. If we can design a multi-player decentralized learning algorithm with such a regret order, then it would imply that such a lower bound is tight for this setting as well. 
\section{Main Results}\label{sec:main_results}
To maximize the cumulative rewards, the players will define the UCB (Upper Confidence Bound) index for each joint action (not just their own action set), and try to pull arms with high UCB indices as often as possible. This index for player $i$ is given by
\begin{equation}\label{index:mUCB}
	\eta^i_{\bm{a}}(t) =  \begin{cases}
		\infty, & \text{if } n_{\bm{a}}(t) = 0,\\
\hat{\mu}_{\bm{a}}^i(n_{\bm{a}}(t)) + \sqrt{\frac{2\log(1/\delta)}{n_{\bm{a}}(t)}}, & \text{otherwise}.
	\end{cases}
\end{equation}
Note that $\eta_{\bm{a}}^i$ is indexed by the player because each player observes a different empirical reward mean $\hat{\mu}_{\bm{a}}^i(n_{\bm{a}}(t))$. 

Let $\epsilon_{\bm{a}}(\cdots)$ be the constant added to the empirical mean for arm $\bm{a}$ in calculating the UCB index. This constant is the same for every player which is why is not indexed by $i$. Since the intervals used in our algorithm \ref{algo} will be of length $2\epsilon_{\bm{a}}(\cdots)$, we add a hyperparameter $\gamma$ to tune this interval length. More explicitly, it is defined as  
	\begin{equation}
	    \epsilon_{\bm{a}}( n_{\bm{a}}, \delta, \gamma):= \gamma\sqrt{\frac{\log(1/\delta)}{n_{\bm{a}}}}
	\end{equation}

where $t$ is the round number, $n_{\bm{a}}$ is the number of times arm $\bm{a}$ has been fulled, $\delta$ a constant that is set to $\frac{1}{T^2}$, and $\gamma$ scales the length of our interval. We know from Hoeffding's bound that for subgaussian variables, the true mean $\mu_{\bm{a}}$ is within the interval 
\begin{equation}\label{eq:I}
I_{\bm{a}}^i = (\hat{\mu}_{\bm{a}}^i- \epsilon_{\bm{a}}, \hat{\mu}_{\bm{a}}^i + \epsilon_{\bm{a}})
\end{equation} with high probability, where we have omitted the arguments as they are clear from the context. Using this, at each round we can create a set of arms that are likely to be optimal and make our selection from this set. Note that each player $i$ has a different interval for joint action $\bm{a}$ due to the empirical means $\hat{\mu}_{\bm{a}}^i$ being different. However, for each joint action $\bm{a}$, every player has intervals of the same length for this action. Thus $\epsilon_{\bm{a}}$ is not indexed by the player $i$ since it is the same for each player. 

We now propose \texttt{mUCB-Intervals} in Algorithm \ref{algo} used to deal with reward asymmetry.
 In this algorithm, all the players will maintain a \emph{desired set} which contains the joint arms that are candidates for the optimal arm. Initially, all $K^m$ joint actions are in this desired set. To ensure coordination, all the players will agree before the learning process on an order for the set of joint actions in this desired set. This is similar to how \cite{chang2021online} dealt with asymmetry in actions. Furthermore, by observing the actions of the other players, they will be able to maintain the same desired sets at each round. At a particular round, a joint action is called \emph{considered} if it's the arm in the desired set that was supposed to be pulled that round in accordance with the order that was agreed upon by all the players. We will use $\bm{c}$ to denote a joint action in the desired set. If there are $\ell$ joint actions $\bm{c}_1,\ldots,\bm{c}_\ell$, then the ordering of the desired set can be viewed using the following flow chart.
\begin{equation}\label{eq:flow}
  \begin{tikzcd}
\bm{c}_1 \to \bm{c}_2\to \cdots  \to \bm{c}_{\ell} \arrow[bend left = 10,
start anchor={[xshift= 10 ex, yshift = -1.5ex]},
end anchor={[xshift= -9ex, yshift = -1.5ex]}]
\end{tikzcd}
\end{equation}

\begin{algorithm}
\caption{\texttt{mUCB-Intervals}}
Each player $P_i$ has all the joint arms in their \emph{desired sets}. All the players will agree on the ordering of the joint arms.\

\For{$t = 1,\ldots,K^M$}{
Each player $i$ will pull each joint action $\bm{a}$ once in the order they have decided in advance, and update $I_{\bm{a}}^i$. \
}
\For{$t = K^M+1,\ldots, T$}
{
    Each player $i$ identifies the next arm considered $\bm{c}_{t}$ in the desired set based on the arm pulled in the previous round (see flowchart in \eqref{eq:flow}).\
    
    \eIf{exists player $i$ and joint action $\bm{a'}$ such that $I_{\bm{a'}}^i$ is above and disjoint from $I_{\bm{c}_t}^i$}{
    Player $i$ will not pull $\bm{c}_t[i]$ to inform the other players that he will remove $\bm{c}_{t}$ from his desired set. \
    }
    {
    Each player $i$ pull $\bm{c}_t[i]$.\ 
    }
    Each player $i$ observes the actions from other players to determine the joint action taken $\bm{a}_t$ at that step. They observe their own i.i.d. reward, and update their $I_{\bm{a}_t}^i$. \
    
    \If{$\bm{a}_t \neq \bm{c}_t$}{
    All players eliminate $\bm{c}_{t}$ from their desired set whilst maintaining the same ordering of the remaining arms (see flowchart in \eqref{eq:flow_eliminate}) \
    }
}
\label{algo}
\end{algorithm}

A joint arm is eliminated from this desired set, if at a particular round $t$, a player $i$ observes that the considered joint arm (call it $\bm{a}_{k_t}$ in the flow diagram above) has a UCB interval that is below and disjoint from another arm. Then that player $i$ will effectively 'communicate' this elimination to the other players by not pulling $\bm{c}_{t}[i]$. Other players will observe that $\bm{c}_{t}[i]$ was not pulled by player $i$ and eliminate it from their desired sets as well whilst maintaining the same ordering of the remaining arms. In other words, pulling a joint arm that isn't considered is the same as removing the considered arm from the desired set. The new flow diagram after the elimination is,
\begin{equation}\label{eq:flow_eliminate}
  \begin{tikzcd}
\bm{c}_1 \to \bm{c}_2\to  \cdots\bm{c}_{t-1}\to \bm{c}_{t+1}\to \cdots  \to \bm{c}_{\ell} \arrow[bend left = 10,
start anchor={[xshift= 20 ex, yshift = -1.5ex]},
end anchor={[xshift= -19ex, yshift = -1.5ex]}]
\end{tikzcd}
\end{equation}
and the arm that is considered in the next round $t+1$ is the next arm in the ordering prior to the elimination. In the flowchart \eqref{eq:flow_eliminate} it is $\bm{a}_{k_t+1}$. On the other hand, if the arm that was considered is the same as the one that was pulled, then the flowchart in \eqref{eq:flow} remains unchanged.

\subsection{Example}\label{example}
Due to the novelty of this algorithm and setting, we will present an example of how this algorithm runs. Consider a two-player setting where each player has two arms. We will represent each joint arm as an entry in a matrix and what goes inside the matrix is the corresponding UCB interval defined in equation \eqref{eq:I}. The subscript of the matrix will correspond to the player. The rows are numbered by player 1's actions while the columns are numbered by player 2's actions. Therefore, our initial matrix for player $1$ and player $2$ respectively  is  
$$
    \kbordermatrix{ & 1 & 2  \\
     1 &    (-\infty, \infty) & (-\infty, \infty) \\ 
     2& (-\infty, \infty) & (-\infty, \infty)
  }_1, \kbordermatrix{ & 1 & 2  \\
     1 &    (-\infty, \infty) & (-\infty, \infty) \\ 
     2& (-\infty, \infty) & (-\infty, \infty)
  }_2
  $$
   Suppose the order of the joint actions is
   
\begin{equation}
  \begin{tikzcd}
(1, 1)\to (1, 2)\to (2, 1) \to (2,2)\arrow[bend left = 10,
start anchor={[xshift= 15 ex, yshift = -1.5ex]},
end anchor={[xshift= -14ex, yshift = -1.5ex]}]
\end{tikzcd}
\end{equation}
Initially, all the UCB intervals are infinite and each arm belongs in the desired set so each arm gets pulled at least once in order. Suppose the matrices for both players after pulling these four joint actions from $t = 1$ to $t = 4$ are now 
\begin{equation}
    \kbordermatrix{ & 1 & 2  \\
     1 &    (.2, .5) & (.3, .6) \\ 
     2& (.55, .9) & (.65, .8)
  }_1, \kbordermatrix{ & 1 & 2  \\
     1 &    (.5, .7) & (.4, .7) \\ 
     2& (.6, .9) & (.65, .8)
  }_2
\end{equation}
In the following round $t = 5$, the next arm to pull in order is $(1, 1)$ because once you reach the end of the desired arm set you start over from the beginning. Thus the action that is \emph{considered} at this round is $(1, 1)$. However, note that the interval for arm $(1, 1)$ for player $1$ is now disjoint from the interval for arm $(2, 1)$. Therefore, player $1$ knows with a high probability that this arm is not optimal and is thus ready to eliminate it. In order to communicate that to player $2$, he will pull arm $2$ instead of arm $1$. Player 2 had no intention of eliminating arm $(1, 1)$ so he will pull arm $1$. Thus the joint action that was taken at $t=5$ is $(2, 1)$. Player $2$ will observe that player $1$ pulled a different arm than what was agreed upon, and understands that $(1, 1)$ needs to be eliminated. Thus the desired set for both players now only contains, 
   \begin{equation}
  \begin{tikzcd}
(1, 2)\to (2, 1) \to (2,2)\arrow[bend left = 10,
start anchor={[xshift= 10 ex, yshift = -1.5ex]},
end anchor={[xshift= -9ex, yshift = -1.5ex]}]
\end{tikzcd}
\end{equation}

The next \emph{considered} arm for $t = 6$ is $(1, 2)$ because that was the arm right after $(1, 1)$ prior to the elimination, and this process repeats itself until eventually there is only $1$ arm left which will be the optimal arm with high probability. 

\subsection{Discussion}
We discuss why this algorithm performs so much better than \texttt{mDSEE} from \cite{chang2021online} which was used in asymmetry in both rewards and actions. \texttt{mDSEE} explores all arms equally at exponentially increasing intervals, which incurs a lot of regret during the exploration process. In comparison, \texttt{mUCB-Intervals} only focuses on arms that can potentially be the optimal one, by eliminating arms that are clearly suboptimal immediately. 

Furthermore, \texttt{mDSEE} requires a unique global optimum. This is because if there are two optimal actions, then at the committing phase, players will commit to different joint actions with constant probability. However, since the desired sets of all the players are the same, in this setting a unique global optimal action is not required. 

This algorithm is similar to best arm identification algorithms such as those in \cite{audibert2010best}, where they also study UCB intervals to determine which arm is optimal. However, in our case, the predefined ordering of the joint arms and maintenance of the same desired set for each player are what makes coordination possible.  

\subsection{Regret Bounds}
We have the following regret bounds for \texttt{mUCB-Intervals}.
\begin{theorem}\label{thm:mUCB-gap}
    For any choices of $\gamma > 0$, the gap dependent regret of algorithm \texttt{mUCB-Intervals} is $R_T = O\left(\left(\sum_{\bm{a} \in \mathcal{A}}\frac{1}{\Delta_{\bm{a}}}\right)\log T + MK^2\right)$.
\end{theorem}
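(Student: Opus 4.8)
The plan is to follow the classical gap-dependent UCB analysis, adapted to the round-robin elimination structure. First I would decompose the regret over joint actions,
$$R_T = \sum_{\bm{a} \neq \bm{a}^*}\Delta_{\bm{a}}\,\mathbb{E}[n_{\bm{a}}(T)],$$
so that it suffices to bound $\mathbb{E}[n_{\bm{a}}(T)]$, the expected number of pulls of each suboptimal joint action $\bm{a}$, by $O(\log T / \Delta_{\bm{a}}^2)$ up to lower-order terms. The whole argument is conditioned on the \emph{good event} $G$ that every true mean lies in its confidence interval, i.e.\ $\mu_{\bm{a}} \in I_{\bm{a}}^i$ for all joint actions $\bm{a}$, players $i$, and all pull-counts. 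Since $n\,\epsilon_{\bm{a}}^2 = \gamma^2\log(1/\delta)$ is constant in $n$ and $\delta = T^{-2}$, Hoeffding's inequality gives a per-(arm, player, count) failure probability of $2T^{-\gamma^2}$; a union bound over the $K^M$ actions, $M$ players, and at most $T$ counts controls $P(G^c)$, and on $G^c$ I bound the regret crudely by $T$ so that its contribution is absorbed into the additive term once $\gamma$ is taken large enough.

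Second, I would show that \emph{on $G$ the optimal arm is never eliminated}. Elimination of $\bm{a}^*$ would require some player $i$ to observe $I_{\bm{a}^*}^i$ lying below and disjoint from some $I_{\bm{a'}}^i$; but on $G$ this forces $\mu_{\bm{a}^*} < \hat{\mu}_{\bm{a}^*}^i + \epsilon_{\bm{a}^*} < \hat{\mu}_{\bm{a'}}^i - \epsilon_{\bm{a'}} < \mu_{\bm{a'}}$, contradicting the optimality of $\bm{a}^*$. Hence under $G$ the desired set always contains $\bm{a}^*$, and the coordination mechanism guarantees every player maintains an identical desired set throughout.

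Third comes the core counting step. Because the surviving arms are pulled in a fixed cyclic order, every arm still in the desired set — in particular $\bm{a}^*$ — is pulled once per cycle, so while a suboptimal $\bm{a}$ survives one has $n_{\bm{a}}(t) = n_{\bm{a}^*}(t) + O(1)$, and therefore $\epsilon_{\bm{a}}$ and $\epsilon_{\bm{a}^*}$ are comparable. On $G$, a sufficient condition for $\bm{a}$ to be removed when it is considered is that $I_{\bm{a}}^i$ separates below $I_{\bm{a}^*}^i$, which is implied by $2\epsilon_{\bm{a}} + 2\epsilon_{\bm{a}^*} < \Delta_{\bm{a}}$. Using the lockstep relation this reduces to a single threshold $n_{\bm{a}} \geq c\,\gamma^2\log(1/\delta)/\Delta_{\bm{a}}^2 = O(\log T / \Delta_{\bm{a}}^2)$, beyond which $\bm{a}$ cannot remain the considered arm without being eliminated. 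This yields $\mathbb{E}[n_{\bm{a}}(T)\mid G] = O(\log T/\Delta_{\bm{a}}^2)$, and multiplying by $\Delta_{\bm{a}}$ and summing gives the leading term $O\big((\sum_{\bm{a}} 1/\Delta_{\bm{a}})\log T\big)$. The additive $O(MK^2)$ collects the initialization sweep, the at-most-one deviation (signaling) round per elimination, and the $G^c$ contribution, all of which are dominated by the main term or are constant in $T$.

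The step I expect to be the main obstacle is making the lockstep relation $n_{\bm{a}}(t) = n_{\bm{a}^*}(t)+O(1)$ fully rigorous in the presence of signaling rounds: when a player deviates to communicate an elimination, the joint action actually played is not the considered arm, which perturbs the pull counts and briefly breaks the clean round-robin bookkeeping. I would handle this by charging each such perturbation to its elimination event and verifying that these finitely many off-schedule pulls shift each threshold by only an additive constant, so the order in $T$ is unaffected. A secondary subtlety is the dependence on $\gamma$: the failure term requires $\gamma$ bounded away from $0$, so the ``for any $\gamma>0$'' statement is most safely read with the hidden constant in the big-$O$ depending on $\gamma$.
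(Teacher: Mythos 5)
Your overall architecture matches the paper's proof: the regret decomposition over joint actions, a good event $G$ under which every interval contains its true mean, the observation that on $G$ the optimal arm is never eliminated so the round-robin structure keeps pull counts in lockstep ($n_{\bm{a}^*}(t) \geq n_{\bm{a}}(t) - 1$), and an elimination threshold of order $\log(1/\delta)/\Delta_{\bm{a}}^2$ beyond which a surviving suboptimal arm must be removed when considered. Your one structural deviation is benign and arguably cleaner: you let $G$ alone force the separation deterministically once the counts pass the threshold, whereas the paper introduces an auxiliary event $G_{\bm{a}}^{\text{cross}}$ (one-sided concentration of $\hat{\mu}_{\bm{a}}$ at a specific count $u_{\bm{a}}$) and pays for it with an extra probability term $M\exp(-u_{\bm{a}}c^2\Delta_{\bm{a}}^2/2)$. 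Your handling of the signaling rounds is also more careful than the paper's, which implicitly assumes $\bm{a}$ is pulled only when it is considered.

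The genuine gap is in the claim ``for any $\gamma > 0$.'' You fix $\delta = T^{-2}$, obtain a per-(arm, player, count) failure probability $2T^{-\gamma^2}$, and then say the $G^c$ contribution is absorbed into the additive term ``once $\gamma$ is taken large enough,'' later hedging that for small $\gamma$ the hidden constant should be allowed to depend on $\gamma$. That hedge does not save the argument: with $\delta = T^{-2}$ the bad-event contribution to the regret is of order $MK^M T \cdot T^{1-\gamma^2} = MK^M T^{2-\gamma^2}$, which for, say, $\gamma = 1/2$ is $T^{7/4}$ --- polynomial in $T$, so no $\gamma$-dependent constant turns it into $O(\log T + MK^2)$. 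As written, your proof establishes the theorem only for $\gamma \gtrsim \sqrt{2}$, not for all $\gamma > 0$. The paper's resolution --- the idea your proposal is missing --- is to couple the confidence parameter to $\gamma$: in the proof it sets $\delta = T^{-2/\gamma}$ (overriding the $\delta = T^{-2}$ stated in the setup), so that the union-bound term $\delta^{\gamma}$ equals $T^{-2}$ identically in $\gamma$ and the $G^c$ contribution becomes $O(MK)$ for every $\gamma > 0$; the cost of a small $\gamma$ then appears only in the interval width $\gamma\sqrt{\log(1/\delta)/n} = \sqrt{2\gamma\log(T)/n}$, hence in the threshold $u_{\bm{a}} = O(\gamma\log(T)/\Delta_{\bm{a}}^2)$, and not in the exponent of $T$. (The paper's own exponent bookkeeping, $\delta^{\gamma}$ versus $\delta^{\gamma^2/2}$, is itself loose, but making $\delta$ a function of $\gamma$ and $T$ is exactly what converts ``large enough $\gamma$'' into ``any $\gamma > 0$.'') Your proof becomes complete if you replace the fixed $\delta = T^{-2}$ with such a $\gamma$-dependent choice.
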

\begin{theorem}\label{thm:mUCB-nogap}
    For any choices of $\gamma > 0$, the gap-independent regret bound of Algorithm \ref{algo} is $R_T = O(\sqrt{KT\log(T)})$.
    \end{theorem}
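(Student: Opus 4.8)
The plan is to obtain the gap-independent bound as a corollary of the gap-dependent bound in Theorem \ref{thm:mUCB-gap} via the standard threshold (peeling) argument. First I would record the canonical regret decomposition: since each joint arm $\bm{a}$ yields rewards with mean $\mu_{\bm{a}}$ and the arm played at round $t$ is measurable with respect to the history before the reward is drawn, the tower property gives $\mathbb{E}[\sum_{t} X_{\bm{a}(t)}(t)] = \sum_{\bm{a}} \mu_{\bm{a}}\, \mathbb{E}[n_{\bm{a}}(T)]$, and hence
$$R_T = \sum_{\bm{a} \in \mathcal{A}} \Delta_{\bm{a}}\, \mathbb{E}[n_{\bm{a}}(T)].$$
The engine of the argument is the per-arm pull bound that already underlies Theorem \ref{thm:mUCB-gap}: a suboptimal joint arm $\bm{a}$ survives in the desired set only while its interval $I^i_{\bm{a}}$, of half-width $\epsilon_{\bm{a}} = \gamma\sqrt{\log(1/\delta)/n_{\bm{a}}}$ with $\delta = 1/T^2$, fails to be disjoint from and below the optimal arm's interval. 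On the high-probability event that the empirical means concentrate, this forces elimination once $n_{\bm{a}} \gtrsim \log T / \Delta_{\bm{a}}^2$, so that $\mathbb{E}[n_{\bm{a}}(T)] = O(\log T / \Delta_{\bm{a}}^2)$; the complementary failure event contributes only $O(1/T)$ thanks to the choice $\delta = 1/T^2$ and a union bound over the $O(T\,|\mathcal{A}|)$ relevant (arm, count) pairs.

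Next I would introduce a free threshold $\Delta > 0$ and split the decomposition into small-gap arms ($\Delta_{\bm{a}} \le \Delta$) and large-gap arms ($\Delta_{\bm{a}} > \Delta$). For the small-gap arms I use only the trivial constraint that the pull counts sum to at most $T$, giving $\sum_{\bm{a}:\Delta_{\bm{a}}\le\Delta}\Delta_{\bm{a}}\,\mathbb{E}[n_{\bm{a}}(T)] \le \Delta\sum_{\bm{a}}\mathbb{E}[n_{\bm{a}}(T)] \le \Delta T$. For the large-gap arms I apply the per-arm bound $\Delta_{\bm{a}}\,\mathbb{E}[n_{\bm{a}}(T)] = O(\log T/\Delta_{\bm{a}}) = O(\log T/\Delta)$ and multiply by the number of surviving suboptimal arms, at most $|\mathcal{A}|$. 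Absorbing the $O(MK^2)$ additive initialization term from Theorem \ref{thm:mUCB-gap}, this yields $R_T \le \Delta T + C\,|\mathcal{A}|\,\log T/\Delta + O(MK^2)$ for an absolute constant $C$. Optimizing by equating the two leading terms, i.e. taking $\Delta = \sqrt{C\,|\mathcal{A}|\,\log T / T}$, produces $R_T = O(\sqrt{|\mathcal{A}|\,T\log T})$, with the additive constant being lower order for large $T$.

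The point that I expect to require the most care is the arm-count factor. The peeling argument produces $\sqrt{|\mathcal{A}|\,T\log T}$ naturally, and under the paper's convention that each of the $M$ players has $K$ actions one has $|\mathcal{A}| = K^M$; recovering the sharper factor $\sqrt{K}$ asserted in the statement requires either reading $K$ as the total number of joint arms, or exploiting structure of the round-robin elimination to bound the \emph{aggregate} number of suboptimal pulls by less than the crude $|\mathcal{A}|\,\log T/\Delta$. Consequently the two steps I would scrutinize hardest are (i) the uniform concentration event that licenses the per-arm bound simultaneously across all arms and counts via the union bound against $\delta = 1/T^2$, and (ii) the precise count of how many joint arms can simultaneously remain with gap exceeding $\Delta$, since that count is exactly what decides whether the leading factor reads $\sqrt{K}$ or $\sqrt{K^M}$.
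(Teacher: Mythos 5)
Your proposal is correct and follows essentially the same route as the paper's proof: the regret decomposition of Lemma \ref{lem:regret_decomp}, a split at a gap threshold, the per-arm pull bound \eqref{eq:n} from the proof of Theorem \ref{thm:mUCB-gap} for large-gap arms, the trivial bound $\epsilon T$ for small-gap arms, and optimization of the threshold. In fact, your execution of the final step is sharper than the paper's: the paper picks $\epsilon = \sqrt{\log T/T}$, which, once the sum over the up-to-$K^M$ suboptimal arms is counted, only yields $O(K^M\sqrt{T\log T})$, whereas your choice $\Delta = \sqrt{\abs{\mathcal{A}}\log T/T}$ is the one that genuinely balances the two terms and moves the arm count under the square root, giving $O(\sqrt{\abs{\mathcal{A}}\,T\log T})$. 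The concern you raise in your last paragraph is also legitimate rather than a defect of your argument: with $\abs{\mathcal{A}} = K^M$ joint arms, the peeling argument delivers $O(\sqrt{K^M T\log T})$, so the factor $\sqrt{K}$ in the theorem statement can only be recovered by reading $K$ as the total number of joint actions (inconsistent with the paper's convention that each of the $M$ players has $K$ arms); nothing in the round-robin elimination structure reduces the count of simultaneously surviving large-gap arms below $\abs{\mathcal{A}}$, and the paper's own proof does not attempt such a reduction.
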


\section{Proofs}
In this section, we present the proofs to the regret bounds given in section \ref{sec:main_results}. The following property on subgaussian variables will be important. 
\begin{lemma}
	Assume that $X_i - \mu$ are independent, $\sigma$-subgaussian random variables. Then, for any $\epsilon \geq 0$,
	\begin{equation}
        \begin{split}
		P\left(\hat{\mu} \geq \mu + \epsilon\right) \leq \exp\left(-\frac{T\epsilon^2}{2\sigma^2}\right)\quad  \\
          \text{and} \quad P\left(\hat{\mu} \leq \mu - \epsilon\right) \leq \exp\left(-\frac{T\epsilon^2}{2\sigma^2}\right)
        \end{split}
	\end{equation}
	where $\hat{\mu} = \frac{1}{T}\sum_{t=1}^T X_t$. 
	\label{corollary5.5}	
\end{lemma}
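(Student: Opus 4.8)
The plan is to prove both tail bounds via the standard Chernoff (exponential Markov) method, exploiting the defining moment generating function property of sub-Gaussian variables. Recall that $Y$ is $\sigma$-subgaussian when $\mathbb{E}[\exp(\lambda Y)] \leq \exp(\lambda^2\sigma^2/2)$ for every $\lambda \in \mathbb{R}$, and the hypothesis is precisely that each centered variable $Y_i := X_i - \mu$ satisfies this. First I would reduce the statement about $\hat\mu$ to one about the centered sum $S := \sum_{i=1}^T Y_i$, using $\hat\mu - \mu = S/T$, so that $P(\hat\mu \geq \mu + \epsilon) = P(S \geq T\epsilon)$.

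For the upper tail, fix any $\lambda > 0$ and apply Markov's inequality to the nonnegative random variable $\exp(\lambda S)$:
\[
P(S \geq T\epsilon) \leq \exp(-\lambda T\epsilon)\,\mathbb{E}[\exp(\lambda S)].
\]
By independence of the $Y_i$ the expectation factors as a product, and the sub-Gaussian bound on each factor gives $\mathbb{E}[\exp(\lambda S)] \leq \exp(T\lambda^2\sigma^2/2)$. Hence $P(S \geq T\epsilon) \leq \exp(-\lambda T\epsilon + T\lambda^2\sigma^2/2)$.

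The only remaining step is to optimize the free parameter. Minimizing the exponent $-\lambda T\epsilon + T\lambda^2\sigma^2/2$ over $\lambda > 0$ yields $\lambda = \epsilon/\sigma^2$ and exponent value $-T\epsilon^2/(2\sigma^2)$, which is exactly the claimed bound; note $\lambda > 0$ holds since $\epsilon \geq 0$. The lower tail then follows immediately by symmetry, since each $-Y_i$ is also $\sigma$-subgaussian, so applying the identical argument to $-S$ gives $P(\hat\mu \leq \mu - \epsilon) = P(-S \geq T\epsilon) \leq \exp(-T\epsilon^2/(2\sigma^2))$. I do not anticipate any genuine obstacle here, as this is a textbook concentration inequality; the only points requiring care are ensuring the moment-generating-function factorization correctly invokes independence and verifying that the optimizing $\lambda$ is admissible.
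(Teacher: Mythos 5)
Your proof is correct and is essentially the same argument as the paper's: the paper simply cites Corollary 5.5 of Lattimore and Szepesv\'{a}ri (noting that $\hat{\mu}-\mu$ is $\sigma/\sqrt{T}$-subgaussian), and the proof behind that citation is exactly the Chernoff/MGF argument you spell out, with your factorization of $\mathbb{E}[\exp(\lambda S)]$ playing the role of showing the average is $\sigma/\sqrt{T}$-subgaussian. The only cosmetic slip is the remark that $\lambda = \epsilon/\sigma^2 > 0$ ``since $\epsilon \geq 0$''; when $\epsilon = 0$ the optimizer is $\lambda = 0$, but the claimed bound is then $\exp(0)=1$ and holds trivially.
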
	
\begin{proof}
	The reader is encouraged to look at Corollary 5.5 of \cite{lattimore2020bandit}. It relies on the observation that $\hat{\mu} - \mu$ is $\sigma/\sqrt{T}$-subgaussian. 
\end{proof}

The following regret decomposition is also going to be useful in our proofs. 

\begin{lemma}\label{lem:regret_decomp}
	With regret defined in \eqref{eq:regret}, we have the following regret decomposition for each player $i$:
	\begin{equation}\label{eq:regret_decomp}
		R_T^i = \sum_{\bm{a}} \Delta_{\bm{a}}\mathbb{E}\left[n_{\bm{a}}(T)\right],
	\end{equation}
	where $n_{\bm{a}}(T)$ is the number of times the arm $\bm{a}$ has been pulled up to round $T$.
 
 \begin{proof}
     The reader is encouraged to look at Lemma 4.5 of \cite{lattimore2020bandit}. It follows from the fact that each round you pull arm $\bm{a}$, you incur (in expectation) $\Delta_{\bm{a}}$ regret. 
 \end{proof}
\end{lemma}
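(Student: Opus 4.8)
The plan is to derive the decomposition directly from the definition of expected regret in \eqref{eq:regret}, by rewriting the per-round loss in terms of the suboptimality gaps and then regrouping the sum over time into a sum over joint actions. Since the rewards $X_{\bm{a}}^i$ are $1$-subgaussian with the common mean $\mu_{\bm{a}}$ for every player, the argument is identical for each $i$, so I suppress the player index where convenient and only reinstate it to emphasize that the identity holds player-by-player.

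First I would write $R_T^i = \mathbb{E}\left[\sum_{t=1}^T \left(\mu^* - X_{\bm{a}_t}^i(t)\right)\right]$ and isolate a single round. Let $\mathcal{F}_{t-1}$ denote the history generated by all joint actions and observed rewards up to (but not including) round $t$. The joint action $\bm{a}_t$ produced by \texttt{mUCB-Intervals} is $\mathcal{F}_{t-1}$-measurable, whereas the reward $X_{\bm{a}_t}^i(t)$ is a fresh draw from $F_{\bm{a}_t}$ that is independent of $\mathcal{F}_{t-1}$ given $\bm{a}_t$. Hence by the tower rule $\mathbb{E}\left[X_{\bm{a}_t}^i(t) \mid \mathcal{F}_{t-1}\right] = \mu_{\bm{a}_t}$, and therefore $\mathbb{E}\left[\mu^* - X_{\bm{a}_t}^i(t)\right] = \mathbb{E}\left[\mu^* - \mu_{\bm{a}_t}\right] = \mathbb{E}\left[\Delta_{\bm{a}_t}\right]$.

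Next I would regroup. Summing over $t$ and using linearity of expectation gives $R_T^i = \mathbb{E}\left[\sum_{t=1}^T \Delta_{\bm{a}_t}\right]$. Writing $n_{\bm{a}}(T) = \sum_{t=1}^T \mathbf{1}\{\bm{a}_t = \bm{a}\}$ for the number of pulls of joint action $\bm{a}$, the identity $\sum_{t=1}^T \Delta_{\bm{a}_t} = \sum_{\bm{a}} \Delta_{\bm{a}}\, n_{\bm{a}}(T)$ holds pathwise (it is just a finite reindexing that collects all rounds pulling the same arm), and taking expectations yields $R_T^i = \sum_{\bm{a}} \Delta_{\bm{a}}\, \mathbb{E}\left[n_{\bm{a}}(T)\right]$, as claimed.

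The only genuine subtlety — and the step I would be most careful about — is the conditioning argument establishing $\mathbb{E}\left[X_{\bm{a}_t}^i(t) \mid \mathcal{F}_{t-1}\right] = \mu_{\bm{a}_t}$, since it is here that the measurability of $\bm{a}_t$ with respect to the past and the conditional independence of the current reward draw must both be invoked; everything else reduces to linearity of expectation and the reindexing above. Because this is the standard bandit regret decomposition, I would either present the one-line conditioning argument just given or simply cite Lemma 4.5 of \cite{lattimore2020bandit}.
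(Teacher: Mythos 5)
Your proposal is correct and is essentially the paper's own argument written out in full: the paper simply cites Lemma 4.5 of \cite{lattimore2020bandit} with the remark that each pull of $\bm{a}$ incurs $\Delta_{\bm{a}}$ regret in expectation, and your tower-rule conditioning plus the pathwise regrouping $\sum_{t=1}^T \Delta_{\bm{a}_t} = \sum_{\bm{a}} \Delta_{\bm{a}} n_{\bm{a}}(T)$ is precisely the standard proof behind that citation. No gaps; the one subtlety you flag (measurability of $\bm{a}_t$ with respect to the history and conditional independence of the fresh reward draw) is handled correctly.
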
 	

We are now ready to present the proof of Theorem \ref{thm:mUCB-gap}
\begin{proof}[Proof of Theorem \ref{thm:mUCB-gap}] 
We will write $\hat{\mu}_{\bm{a}}^i(n_{\bm{a}}(t))$ as $\hat{\mu}_{\bm{a}}^i$ when it is clear from the context what the argument should be.

We suppose that the first arm is the optimal one for each player, that is arm $\bm{1} = \left(1,\ldots,1\right)$ has the highest average reward. We define the following "good" event $G$ where all arms have their true means in the intervals at all times. Explicitly, this is written as 
    
    \begin{equation}
        G = \bigcap_{i=1}^M \bigcap_{\bm{a} \in \mathcal{A}} \{|\hat{\mu}_{\bm{a}}^i - \mu_{\bm{a}}|<\epsilon_{\bm{1}}(t ,\delta, \gamma)|\forall t \in [1, n]\}
    \end{equation}


 
 We define another good event for each arm based on the observed means of arm $\bm{a}$ and arm $1$. 
	\begin{equation}\label{eq_good}
G_{\bm{a}}^{\text{cross}}= \bigcap_{i=1}^M \{ \hat{\mu}_{\bm{a}}^m(u_{\bm{a}}, \delta, \gamma) + \epsilon_{\bm{a}}(u_{\bm{a}}, \delta, \gamma) < \mu_{\bm{1}} - \epsilon_{\bm{1}}(u_{\bm{a}}-1, \delta, \gamma)\}
	\end{equation}

 This set is the event that after $u_{\bm{a}}$ pulls, the UCB indices of all the arms $\bm{a}$ is strictly smaller than lower bound of UCB index of arm $\bm{1}$.
 The next lemma shows that when $G \cap  G_{\bm{a}}^{\text{cross}}\cap  G_{\bm{a}}^{\text{pull}}$ occurs, the number of pull is at most $u_{\bm{a}}$. 
 
 \begin{lemma}
     Under the event $G \cap  G_{\bm{a}}^{\text{cross}}$, the number of pulls of arm $\bm{a}$ is at most $u_{\bm{a}}$.
     
     \begin{proof}
     Under the event $G$ the optimal arm $\bm{1}$ is always in the desired set. As the arms are pulled one at at time in a predefined order in the desired set, if an arm $a$ is at the desired set at time $t$ then $n_{\bm{1}}(t) \geq n_{\bm{a}}(t) -1$. It follows that at the time when $n_{\bm{a}}(t) = u_{\bm{a}}$, we have $n_{\bm{1}}(t) \geq u_{\bm{a}} - 1$. Furthermore, $ \epsilon_{\bm{1}}(u_{\bm{a}}-1, \delta, \gamma)$ is decreasing as a function of the number of pulls so $\epsilon_{\bm{1}}(u_{\bm{a}}-1, \delta, \gamma) \geq \epsilon_{\bm{1}}(n_{\bm{1}}(t), \delta, \gamma)$
     
         Suppose for the sake of contradiction that $n_{\bm{a}}(n) > u_{\bm{a}}$. Then there must exist a round $t$ such that $n_{\bm{a}}(t - 1) = u_{\bm{a}}$ and the action taken at step $t$ was $\bm{a}$,
         
         \begin{align*}
             \eta_{\bm{a}}^i(t - 1) &= \hat{\mu}^m_{\bm{a}}(\cdot, u_{\bm{a}}) + \gamma\sqrt{\frac{\log(1/\delta)}{u_{\bm{a}}}}\\
             &< \mu_{\bm{1}} - \epsilon_{\bm{1}}(u_{\bm{a}}-1, \delta, \gamma)\text{ since } G_{\bm{a}}^{\text{cross}} \text{ occurs}\\
             &\leq \mu_{\bm{1}} - \epsilon_{\bm{1}}(n_{\bm{1}}(t), \delta, \gamma)\text{ since }n_{\bm{1}}(t) \geq u_{\bm{a}} - 1
         \end{align*}
     
        Thus, for each player, it follows that the interval corresponding to arm $\bm{a}$ is disjoint and below the interval corresponding to arm $1$. Thus, arm $\bm{a}$ should have been eliminated from desired set, by round $t$. This contradiction completes the proof. 
     \end{proof}
 \end{lemma}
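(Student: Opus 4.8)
The plan is to argue by contradiction, exploiting the merry-go-round structure of the desired set together with the two good events. Suppose, contrary to the claim, that arm $\bm{a}$ is pulled more than $u_{\bm{a}}$ times. Then there is a round $t$ at which $n_{\bm{a}}(t-1) = u_{\bm{a}}$ and the joint action actually taken is $\bm{a}$; in particular $\bm{a}$ is still in the desired set at round $t$ and is not eliminated when considered. I would derive a contradiction by showing that, under $G \cap G_{\bm{a}}^{\text{cross}}$, the interval $I_{\bm{a}}^i$ is already strictly below and disjoint from $I_{\bm{1}}^i$ for some player $i$ at that round, so the elimination rule of Algorithm \ref{algo} must have removed $\bm{a}$ from the desired set before it could be pulled.

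Two structural facts carry the argument. First, I would show that under $G$ the optimal arm $\bm{1}$ is never eliminated: eliminating $\bm{1}$ would require some arm $\bm{a'}$ whose interval lies above and disjoint from $I_{\bm{1}}^i$, i.e. $\hat{\mu}_{\bm{a'}}^i - \epsilon_{\bm{a'}} > \hat{\mu}_{\bm{1}}^i + \epsilon_{\bm{1}}$; combined with the containments $\mu_{\bm{a'}} \in I_{\bm{a'}}^i$ and $\mu_{\bm{1}} \in I_{\bm{1}}^i$ guaranteed by $G$, this forces $\mu_{\bm{a'}} > \mu_{\bm{1}}$, contradicting the optimality of $\bm{1}$. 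Hence $\bm{1}$ remains in every desired set. Second, because every arm still in the desired set is pulled exactly once per traversal of the cyclic order in \eqref{eq:flow}, the pull counts of any two in-set arms differ by at most one; since $\bm{1}$ is always present, this gives $n_{\bm{1}}(t) \geq n_{\bm{a}}(t) - 1$, and therefore $n_{\bm{1}}(t) \geq u_{\bm{a}} - 1$ at the critical round.

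With these in hand the estimate is a short chain. The UCB index of $\bm{a}$ at round $t-1$ equals $\hat{\mu}_{\bm{a}}^i(u_{\bm{a}}) + \epsilon_{\bm{a}}(u_{\bm{a}})$; event $G_{\bm{a}}^{\text{cross}}$ bounds this by $\mu_{\bm{1}} - \epsilon_{\bm{1}}(u_{\bm{a}}-1)$; monotonicity of $\epsilon_{\bm{1}}$ in the pull count together with $n_{\bm{1}}(t) \geq u_{\bm{a}} - 1$ upgrades the bound to $\mu_{\bm{1}} - \epsilon_{\bm{1}}(n_{\bm{1}}(t))$; and finally $G$ lets me lower bound the left endpoint of $I_{\bm{1}}^i$ by $\mu_{\bm{1}}$ minus a confidence width, yielding $\hat{\mu}_{\bm{a}}^i + \epsilon_{\bm{a}} < \hat{\mu}_{\bm{1}}^i - \epsilon_{\bm{1}}(n_{\bm{1}}(t))$. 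This is precisely the statement that $I_{\bm{a}}^i$ is disjoint from and below $I_{\bm{1}}^i$, so $\bm{a}$ should have been eliminated, completing the contradiction.

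I expect the delicate step to be the last one: converting the gap condition of $G_{\bm{a}}^{\text{cross}}$, which is phrased against the true mean $\mu_{\bm{1}}$, into a statement about the \emph{empirical} interval $I_{\bm{1}}^i$ via $G$, since passing from $\mu_{\bm{1}} - \epsilon_{\bm{1}}(n_{\bm{1}}(t))$ to $\hat{\mu}_{\bm{1}}^i - \epsilon_{\bm{1}}(n_{\bm{1}}(t))$ needs $\mu_{\bm{1}} \leq \hat{\mu}_{\bm{1}}^i$ up to a controlled slack, which is exactly where a factor in the confidence half-width can be lost. I would handle this by defining the threshold $u_{\bm{a}}$ (on the order of $\log(1/\delta)/\Delta_{\bm{a}}^2$) and calibrating the interval half-width through the tuning constant $\gamma$ so that $G_{\bm{a}}^{\text{cross}}$ holds with enough margin to absorb both the off-by-one in the coupled pull counts and the $G$-slack; this deterministic pull bound then feeds into the regret decomposition of Lemma \ref{lem:regret_decomp}, with $\mathbb{P}(G^c)$ and $\mathbb{P}((G_{\bm{a}}^{\text{cross}})^c)$ controlled by Lemma \ref{corollary5.5}.
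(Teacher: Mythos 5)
Your proposal is correct and follows essentially the same argument as the paper's proof: contradiction at the first round where $n_{\bm{a}}$ would exceed $u_{\bm{a}}$, retention of the optimal arm $\bm{1}$ in the desired set under $G$, the off-by-one pull-count coupling $n_{\bm{1}}(t) \geq n_{\bm{a}}(t) - 1$ from the cyclic ordering, monotonicity of $\epsilon_{\bm{1}}$ in the pull count, and the chain of inequalities through $G_{\bm{a}}^{\text{cross}}$ ending in the elimination contradiction. If anything, you are more careful than the paper on two points it asserts without comment: you actually prove that $\bm{1}$ is never eliminated under $G$, and you correctly flag the slack in passing from $\mu_{\bm{1}} - \epsilon_{\bm{1}}(n_{\bm{1}}(t))$ to the empirical lower endpoint $\hat{\mu}_{\bm{1}}^i - \epsilon_{\bm{1}}(n_{\bm{1}}(t))$ (which requires margin built into $u_{\bm{a}}$, since $G$ alone does not give $\mu_{\bm{1}} \leq \hat{\mu}_{\bm{1}}^i$) --- a step the paper's own proof glosses over.
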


    Using the regret decomposition lemma, we can aim to bound $\mathbb{E}[n_{\bm{a}}(t)]$. We can decompose this quantity as follows, 

 \begin{align*}   
 \mathbb{E}[n_{\bm{a}}(T)] &=  \mathbb{E}[n_{\bm{a}}(T)I(G \cap  G_{\bm{a}}^{\text{cross}}] + \mathbb{E}[n_{\bm{a}}(T)I((G \cap  G_{\bm{a}}^{\text{cross}})^c)] \\    
 &= \mathbb{E}[n_{\bm{a}}(T)\mathbb{I}(G \cap  G_{\bm{a}}^{\text{cross}})] \\
 &+ \mathbb{E}[n_{\bm{a}}(T)(\mathbb{I}(G^c) + I(G \cap (G_{\bm{a}}^{\text{cross}})^c))]\\  
 &\leq u_{\bm{a}} + (P(G^c)+ P(G \cap(G_{\bm{a}}^{\text{cross}})^c))T
 \end{align*}
We start with $P(G^c)$: 
 \begin{align*}
	  P(G^c)&= P\left(\bigcup_{i=1}^M \bigcup_{\bm{a}} \left(\{\exists t \in [1, n]: |\hat{\mu}_{\bm{a}}^i - \mu_{\bm{a}}|\geq \epsilon_{\bm{a}}(t ,\delta, \gamma)\} \right)\right)\\
	  &= \sum_{i=1}^M \sum_{\bm{a}} P\{\exists t \in [1, n]: |\hat{\mu}_{\bm{a}}^i - \mu_{\bm{a}}|\geq \epsilon_{\bm{a}}(t ,\delta, \gamma)\} \\
	  &= M\sum_{\bm{a}}\left(\sum_{t=1}^T P\{|\hat{\mu}_{\bm{a}}^i - \mu_{\bm{a}}|\geq \epsilon_{\bm{a}}(t ,\delta, \gamma)\} \right) \\
	  &\leq M\sum_{\bm{a}}\left(\sum_{t=1}^T 2\exp{-\frac{t\epsilon_{\bm{a}}(t, n_{\bm{a}}, \delta, \gamma)^2}{2}} \right)\\
	  &\leq M\sum_{\bm{a}}\left(\sum_{t=1}^T 2\delta^{\frac{t\gamma}{2n_{\bm{a}}}}\right)\\
	  &\leq MKT\delta^\gamma
	\end{align*}
Finally, we turn to $P(G\cap (G_{\bm{a}}^{\text{cross}})^c)$. 
	\begin{align*}
	&P(G \cap (G_{\bm{a}}^{\text{cross}})^c)\\
 &= P\left(\bigcup_{i=1}^M  \{ \hat{\mu}_{\bm{a}}^m(u_{\bm{a}}, \delta, \gamma) + \epsilon_{\bm{a}}(u_{\bm{a}}, \delta, \gamma) \geq \mu_{\bm{1}} - \epsilon_{\bm{1}}(u_{\bm{a}} - 1, \delta, \gamma)\} \right)\\
	  &= M P\{ \hat{\mu}_{\bm{a}}^m(u_{\bm{a}}, \delta, \gamma) + \epsilon_{\bm{a}}(u_{\bm{a}}, \delta, \gamma) \geq \mu_{\bm{1}} - \epsilon_{\bm{1}}(u_{\bm{a}} - 1, \delta, \gamma)\}\\
	  &= M P\{ \hat{\mu}_{\bm{a}}^m(u_{\bm{a}}, \delta, \gamma)  - \mu_{\bm{a}} \geq \Delta_{\bm{a}} - \epsilon_{\bm{a}}(u_{\bm{a}}, \delta, \gamma) - \epsilon_{\bm{1}}(u_{\bm{a}} - 1, \delta, \gamma)\}\\
	  &\leq M\exp\left(-\frac{u_{\bm{a}}(\Delta_{\bm{a}}- \epsilon_{\bm{a}}(u_{\bm{a}}, \delta, \gamma) - \epsilon_{\bm{1}}(u_{\bm{a}} - 1, \delta, \gamma))^2}{2}\right)
	\end{align*}

Let us pick $u_{\bm{a}} = \left\lceil\frac{9\gamma^2\log(1/\delta)}{(1-c)^2\Delta_{\bm{a}}^2}\right\rceil$ for some $c \in (0, 1)$, and this will satisfy,
\begin{align*}
    \Delta_{\bm{a}}- \epsilon_{\bm{a}}(u_{\bm{a}}, \delta, \gamma) - \epsilon_{\bm{1}}(u_{\bm{a}} - 1, \delta, \gamma) &\geq  \Delta_{\bm{a}}- 3\epsilon_{\bm{a}}(u_{\bm{a}}, \delta, \gamma)\\
    & \geq c\Delta_{\bm{a}}
\end{align*}

Combining everything together, we get

\begin{equation}\begin{split}    & P((G_{\bm{a}}^{\text{cross}}\cap G)^c) \\ &\leq  MKT\delta^\gamma +  M\exp\left(\frac{-u_{\bm{a}} c^2\Delta_{\bm{a}}^2}{2}\right)\\   &\leq MKT\delta^\gamma + M\delta^{\frac{\gamma^2\log(1/\delta)c^2}{(1-c)^2\Delta_{\bm{a}}^2}}    \end{split}\end{equation}

Choosing $\delta = \frac{1}{T^{2/\gamma}}$, we obtain the following regret bound
\begin{align*}
   \mathbb{E}[n_{\bm{a}}(T)] &\leq \left\lceil\frac{18\gamma^2\log(1/\delta)}{(1-c)^2\Delta_{\bm{a}}^2}\right\rceil\\
   &+ \left(  MKT\delta^\gamma +  M\delta^{\frac{9\gamma^2\log(1/\delta)c^2}{(1-c)^2\Delta_{\bm{a}}^2}} \right)T\\
   &= \left\lceil\frac{18\gamma\log(T)}{(1-c)^2\Delta_{\bm{a}}^2}\right\rceil + MK + MT^{-\frac{9\gamma c^2}{(1-c)^2\Delta_{\bm{a}}^2}+1}
\end{align*}

Our goal is now to select $c$ so that the exponent of $T$ in the expression above is negative. In order for $-\frac{9\gamma c^2}{(1-c)^2\Delta_{\bm{a}}^2}+1\leq 0$ to hold, it is sufficient that $c \geq \frac{\Delta_{\bm{a}}}{\Delta_{\bm{a}}+3\sqrt{\gamma}}$. Furthermore, it's clear that $c \in (0, 1)$ as originally stated. Thus, with this value of $c$, our bound above becomes 
\begin{align}
\mathbb{E}[n_{\bm{a}}(T)] &\leq\left\lceil\frac{18\gamma\log(T)}{(1-c)^2\Delta_{\bm{a}}^2}\right\rceil + M(K+1) + 1 \\ 
&\leq \left\lceil\frac{2(\Delta_{\bm{a}} + 3\sqrt{\gamma})^2\log(T)}{\Delta_{\bm{a}}^2}\right\rceil + M(K+1) + 1 \label{eq:n} 
\end{align}
Plugging this into the regret decomposition shows $R_T = O(\log(T))$, completing the proof of theorem \ref{thm:mUCB-gap}
\end{proof}  

While our analysis seems to suggest that the smaller the $\gamma > 0$, the better the performance, however, note that when $\gamma$ is too small, there is a higher probability that a good arm is eliminated. In the experiments, we show that when $\gamma$ is sufficiently small, it can outperform even the mUCB algorithm from \cite{chang2021online} which is a coordinated version of the UCB algorithm in single player setting. 

\begin{proof}[Proof of Theorem \ref{thm:mUCB-nogap}]
    	We first take the regret decomposition given by equation \eqref{eq:regret_decomp} and partition the tuples $\bm{a}$ to those whose mean are at most $\epsilon$ away from the optimal  and those whose means are more than $\epsilon$. This gives us the following inequality: 
	\begin{align}
		R_T &= \sum_{\bm{a}} \Delta_{\bm{a}}\mathbb{E}\left[n_{\bm{a}}(T)\right]\\
  &= \sum_{\Delta_{\bm{a}} > \epsilon} \Delta_{\bm{a}}\mathbb{E}\left[n_{\bm{a}}(T)\right] + \sum_{\Delta_{\bm{a}} \leq \epsilon} \Delta_{\bm{a}}\mathbb{E}\left[n_{\bm{a}}(T)\right]\\
  &\leq \sum_{\Delta_{\bm{a}} > \epsilon} \Delta_{\bm{a}}\mathbb{E}\left[n_{\bm{a}}(T)\right] + \epsilon T.
	\end{align}
 Equation \eqref{eq:n}
 yields:
	\begin{align}
		R_T &\leq \sum_{\Delta_{\bm{a}}> \epsilon}O\left( \frac{\log(T)}{\Delta_{\bm{a}}}\right) + \epsilon T \\
  & \leq \sum_{\Delta_{\bm{a}}> \epsilon}O\left( \frac{\log(T)}{\epsilon}\right) + \epsilon T 
	\end{align}
	Since the last inequality holds for all $\epsilon$, we can pick $\epsilon = \sqrt{\frac{\log(T)}{T}}$ to obtain the result in Theorem \ref{thm:mUCB-nogap}.

\end{proof}

\begin{figure*}
\includegraphics[width = \textwidth]{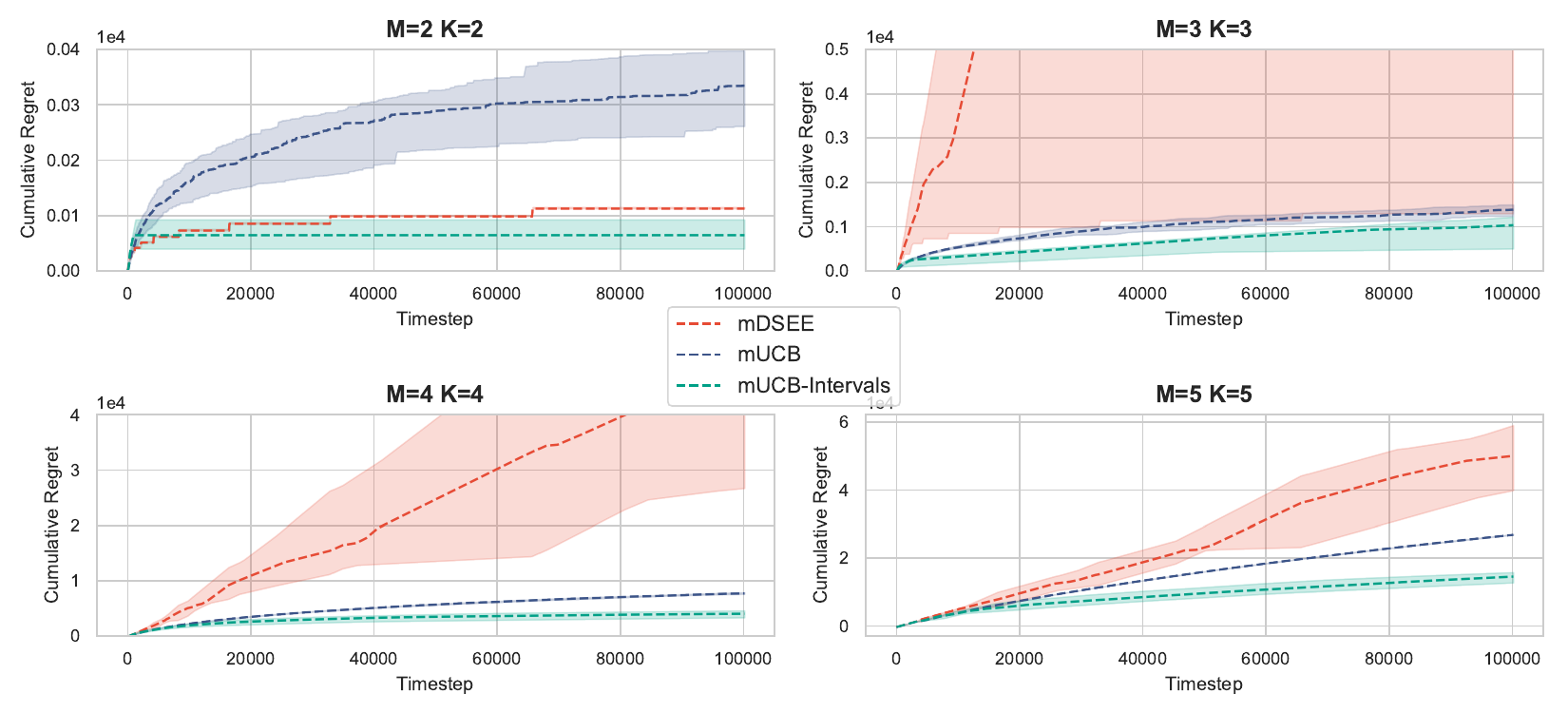}
\caption{Plots comparing the regret of mUCB \cite{chang2021online}, \texttt{mUCB-Intervals}, and mDSEE \cite{chang2021online} under the same reward environment but under asymmetry in actions, asymmetry in rewards, and asymmetry in both respectively for horizon $T = 10^5$. The shaded regions are 95\% confidence intervals. It's clear to see that the algorithm proposed in this paper \texttt{mUCB-Intervals} outperforms the SOTA algorithm for asymmetry in both (mDSEE) by a large margin. The superiority of \texttt{mUCB-Intervals} is more pronounced as the joint action space increases in size.}
\label{fig:experiments}
\end{figure*}

\section{Numerical Simulations}\label{sec:numerical}

In this section, we run simulations to verify the empirical performance of our algorithm. In Figure \ref{fig:experiments}, we plot the regret versus time of \texttt{mUCB-Intervals} analyzed in this paper in comparison with mDSEE and mUCB from \cite{chang2021online}. It should be emphasized these algorithms assume different types of asymmetry. For mDSEE, both action\footnote{Action asymmetry refers to the scenarios where agents do not observe the action taken by other agents, and thus the joint-action taken remains unknown. In general, action asymmetry is usually easier to solve than reward asymmetry because a proper coordination scheme would reduce action asymmetry to a single-player problem.} and reward asymmetry are assumed. mUCB assumes only action asymmetry, and \texttt{mUCB-Intervals} assume only reward asymmetry.

We perform these simulations with Gaussian rewards sampled from distributions with means sampled uniformly from 0 to 1 and standard deviations uniformly from 0 to 0.5. For each environment, we run the simulations for a total of $T = 100,000$ rounds and repeat the simulations for $10$ times to plot both the median and the $95\%$ confidence interval of regret. We set the hyperparameter $\gamma = 0.5$ for \texttt{mUCB-Intervals}. This choice is arbitrary and the performance of \texttt{mUCB-Intervals} is not significantly affected by the choice of $\gamma$ for non-extreme values.

In all simulations, we observe log-like behaviors for the \texttt{mUCB-Intervals} algorithm. It is clear from the plots that \texttt{mUCB-Intervals} exhibits an absolute competitive advantage over other algorithms as their confidence intervals on regret become disjoint in longer horizons. For robustness, \texttt{mUCB-Intervals} also has a significant advantage over mDSEE, as demonstrated by the large red-shaded regions.

These discrepancies in performance can be explained by the differences in the type of asymmetries the algorithms consider. Since \texttt{mDSEE} also deals with asymmetry in actions, when too many actions have the same empirical mean, it becomes easy for the players to mis-coordinate. On the other hand, \texttt{mUCB-Intervals} utilizes observation of actions from the other players to coordinate the actions, giving it a superior performance. On the other hand, we see that \texttt{mUCB-Intervals} outperforms the coordinated UCB algorithm mUCB from \cite{chang2021online} as well. This is because for small values of $\gamma$, there is less exploration so that the \texttt{mUCB-Intervals} is able to eliminate suboptimal arms faster and commit to better arms more often. However, one must be careful to not choose a value of $\gamma$ that is too small, as such a value would discourage sufficient exploration. This will cause god arms to be eliminated too quickly. 

In more detail, in the first few rounds, the UCB intervals for \texttt{mUCB-Intervals} are large, and thus all of them are included in the set for all players. As the rounds progress, the intervals shrink as they tend towards the true means of the arms. Some sub-optimal arms then get eliminated from the set, giving a higher probability that a good arm is pulled. Eventually, with high probability, all the players will only have 1 arm in their set - the optimal one. From that point forward, the regret curve is horizontal because no additional suboptimal arms are pulled. It is clear from this plot that we have sub-log regret.


\section{Conclusions and Future work}
In this paper we considered a cooperative multiplayer bandit learning problem where the players are only allowed to agree on a strategy beforehand, but cannot communicate during the learning process. The actions of all the players are commonly observed. However, each player receives a noisy version of the reward which cannot be shared with other players. We provide an algorithm \texttt{mUCB-Intervals} based on upper and lower confidence bounds scaled by $\gamma$ that the players can use to select their optimal actions despite the asymmetry in the reward information. For any choice of $\gamma$ we show that this algorithm can achieve logarithmic (gap-dependent) regret as well as $O(\sqrt{T}\log T)$ gap-independent regret giving us asymptotically optimal regret for our problem. We ran numerical simulations on multiplayer bandit problem and compared it with \texttt{mDSEE} from \cite{chang2021online}, and saw that some choices of $\gamma$ perform better while others don't. For future work, we can better understand what choices of $\gamma$ lead to a better performance. We can remove the asymmetry in actions (i.e. consider a more general setting where the players cannot observe the other player's action either) and try to derive an algorithm that gives asymptotically optimal regret. We can also consider a bandit MDP setting, where each joint action now changes the environment for everyone. In this setting we can still consider, asymmetry in rewards, asymmetry in actions, or both. 

\newpage

\bibliographystyle{abbrvnat}
\bibliography{references}

\end{document}